\newtheorem{theorem}{Theorem}[section]
\newtheorem{lemma}[theorem]{Lemma}
\def\x{{\mathbf x}}
\def\I{{\mathbf I}}
\def\cc{{\mathbf c}}
\def\T{{\mathcal T}}
\def\F{{\mathcal F}}
\def\R{{\mathbb R}}
\newcommand\norm[1]{\left\lVert#1\right\rVert}
\ifcvprfinal\pagestyle{empty}\fi
\begin{document}

\title{A Theoretically Sound Upper Bound on the Triplet Loss for Improving the Efficiency of Deep Distance Metric Learning}

\author{Thanh-Toan Do$^{1}$, Toan Tran$^{2}$, Ian Reid$^{2}$, Vijay Kumar$^{3}$, Tuan Hoang$^{4}$, Gustavo Carneiro$^{2}$\\
{\small $^{1}$University of Liverpool}
{\small $^{2}$University of Adelaide} 
{\small $^{3}$PARC}
{\small $^{4}$Singapore University of Technology and Design}}

\maketitle

\begin{abstract}
We propose a method that substantially improves the efficiency of deep distance metric learning based on the optimization of the triplet loss function.
One epoch of such training process based on a na\"ive optimization of the triplet loss function has a run-time complexity $\mathcal{O}(N^3)$, where $N$ is the number of training samples.
Such optimization scales poorly, and the most common approach proposed to address this high complexity issue is based on sub-sampling the set of triplets needed for the training process. 
Another approach explored in the field relies on an ad-hoc linearization (in terms of $N$) of the triplet loss that introduces class centroids, which must be optimized using the whole training set for each mini-batch -- this means that a na\"ive implementation of this approach has run-time complexity $\mathcal{O}(N^2)$. This complexity issue is usually mitigated with poor, but computationally cheap, approximate centroid optimization methods.
In this paper, we first propose a solid theory on the linearization of the triplet loss with the use of class centroids, where the main conclusion is that our new linear loss represents a tight upper-bound to the triplet loss.  Furthermore, based on the theory above, we propose a training algorithm that no longer requires the centroid optimization step, which means that our approach is the first in the field with a guaranteed linear run-time complexity.
We show that the training of deep distance metric learning methods using the proposed upper-bound is substantially faster than triplet-based methods, while producing competitive retrieval accuracy results on benchmark datasets (CUB-200-2011 and CAR196).

\end{abstract}

\section{Introduction}

\begin{figure}
	\centering
	\includegraphics[scale=0.38]{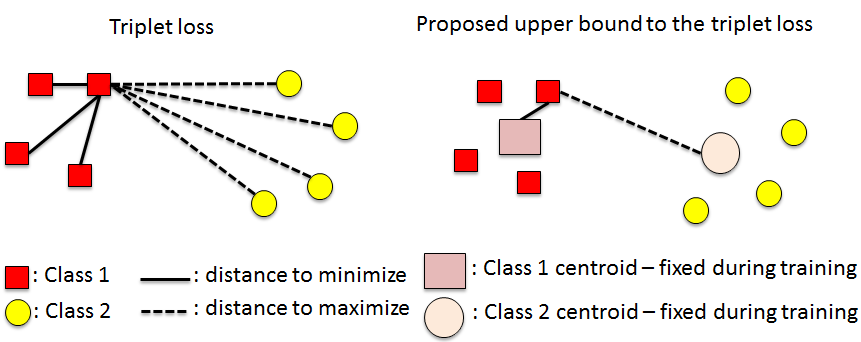}
	\caption{Left: the triplet loss requires the calculation of the loss over $8 \times 3 \times 4 = 96$ elements (cubic complexity in the number of samples). Right: the proposed upper bound to the triplet loss only requires the loss calculation for $8 \times 2=16$ elements (linear complexity in the number of samples and centroids).  In addition, note that the class centroids are fixed during training, overcoming the expensive centroid optimization step, with quadratic complexity, found in similar approaches~\cite{guerriero2018deepncm,DBLP:conf/nips/SnellSZ17,wang2017normface,DBLP:conf/eccv/WenZL016}.}
	\label{fig:motivation}
\end{figure}

Deep distance metric learning (DML) aims at training a deep learning model that transforms training samples into feature embeddings that are close together for samples that belong to the same class and far apart for samples from different classes~\cite{dosovitskiy2014discriminative,han2015matchnet,smart,global,masci2014descriptor,shrivastava2016training,pascal,Simonyanpami14,lifted,histogram,wohlhart2015learning,ZagoruykoCVPR15}.  The use of DML is advantageous compared to more traditional classification models because DML does not rely on a classification layer that imposes strong constraints on the type of problems that the trained model can handle.  For instance, if a model is trained to classify 1000 classes, then the addition of the 1001$^{st}$ class will force the design of a new model structure that incorporates that extra classification node.  In addition, DML requires no model structure update, which means that the learned DML model can simply be fine-tuned with the training samples for the new class.  Therefore, DML is an interesting approach to be used in learning problems that can be continuously updated, such as open-world~\cite{bendale2015towards} and life-long learning problems~\cite{thrun2012learning}.

One of the most common optimization functions explored in DML is the triplet loss, which comprises two terms: 1) a term that minimizes the distance between pairs of feature embeddings belonging to the same class, and 2) another term that maximizes the distance between pairs of feature embeddings from different classes.  The training process based on this triplet loss has run-time complexity $\mathcal{O}(N^3/C)$ per epoch, with $N$ representing the number of samples and $C<N$, the number of classes.
Given that training sets are becoming increasingly larger,  
DML training based on such triplet loss is computationally challenging, and a great deal of work has been focused on the reduction of this complexity without affecting much the effectiveness of DML training.
One of the main ideas explored is the design of mechanisms that select representative subsets of the $\mathcal{O}(N^3)$ triplet samples --- some examples of this line of research are: hard or semi-hard triplet mining~\cite{facenet,pascal,wang2014learning} or smart triplet mining~\cite{smart}.
Unfortunately, these methods still present high computational  complexity, with a worst case training complexity of $O(N^2)$.  Moreover, these approaches also present the issue that the subset of selected triplets  may not be representative enough, increasing the risk of overfitting such subsets.  The mitigation of this problem is generally achieved with the incorporation of an additional loss term that optimizes a global classification to the whole triplet subset~\cite{smart,global,histogram} in order to regularize the training process.

Another idea explored in the field is the ad-hoc linearization of the triplet loss~\cite{guerriero2018deepncm,DBLP:conf/nips/SnellSZ17,wang2017normface,DBLP:conf/eccv/WenZL016}, consisting of the use of auxiliary class centroids.
The training process consists of two alternating steps: 1) an optimization function that generally pulls embeddings towards their class centroid and pushes embeddings away from all other class centroids (hence, $\mathcal{O}(NC)$); and 2) an optimization of the class centroids using the whole training set after the processing of each mini-batch (hence, $\mathcal{O}(N \times N/B)$, where $B$ represents the number of samples in each mini-batch). Therefore, a na\"ive implementation of this method has run-time complexity proportional to $\mathcal{O}(N^2)$. 

In this paper, we provide a solid theoretical background that fully justifies the linearization of the triplet loss, providing a tight upper bound to be used in the DML training process, which relies on the use of class centroids. In particular, our theory shows that the proposed upper-bound differs from the triplet loss by a value that tends to zero if the distances between centroids are large and these distances are similar to each other, and as the training process progresses. Therefore, our theory guarantees that a minimization of the upper bound is also a minimization of the triplet loss. Furthermore, we derive a training algorithm that no longer requires an optimization of the class centroids, which means that our method is the first approach in the field that guarantees a linear run-time complexity for the triplet loss approximation.  Figure~\ref{fig:motivation} motivates our work. 
We show empirically that the DML training using our proposed loss function is one order of magnitude faster than the training of recently proposed triplet loss based methods. In addition, we show that the models trained with our proposed loss produces competitive retrieval accuracy results on benchmark datasets (CUB-200-2011 and CAR196).

\section{Related Work}
\label{sec:sota}

\paragraph{Classification loss.}

It has been shown that deep networks that are trained for the classification task with a softmax 
loss function can be used to produce useful deep feature embeddings. In particular, in~\cite{genericfea,genericfea0} the authors showed that the features extracted from one of the last layers of the deep classification models~\cite{alexnet,vgg} can be used for new classification tasks, involving classes not used for training. 
In addition, the run-time training complexity is quite efficient: $\mathcal{O}(NC)$, where $N$ and $C<N$ represent the number of training samples and number of classes, respectively. 
However, these approaches rely on cross-entropy loss that tries to pull samples over the classification boundary for the class of that sample, disregarding two important points in DML: 1) how close the point is to the class centroid; and 2) how far the sample is from other class centroids (assuming that a class centroid can be defined to be at the centre of the classification volume for each class in the embedding space).  Current evidence in the field shows that not explicitly addressing these two issues made these approaches not attractive for DML, particularly in regards to the classification accuracy for classes not used for training.

\paragraph{Pairwise loss.}
One approach, explored in ~\cite{siamac,pascal} employs a siamese model trained with a pairwise loss. One of the most studied pairwise losses is the contrastive loss~\cite{contrastive0}, which minimizes the distance between pairs of training samples belonging to same class (i.e., positive pairs) and  maximizes the distance between pairs of training samples from different classes (i.e., negative pairs) as long as this ``negative distance'' is smaller than a margin. 
There are few issues associated with this approach. Firstly, the run-time training complexity is $\mathcal{O}(N^2)$, which makes 
this approach computationally challenging for most modern datasets.  
To mitigate this challenge, mining strategies have been proposed to select a subset of the original $\mathcal{O}(N^2)$ pairwise samples.  Such mining strategies focus on selecting pairs of samples that are considered to be hard to classify by the current model.
For instance, Simo-Serra et al.~\cite{pascal} proposed a method that samples positive pairs and sorts them in descending order with respect to the distance between the two samples in the embedding space.  A similar approach is applied for negative pairs, but the sorting is in ascending order. Then, the top pairs in both lists are used as the training pairs. The second issue is that the margin parameter is not easy to tune because the distances between samples can change significantly during the training process. 
Another issue is the fact that the arbitrary way of sampling pairs of samples described above cannot guarantee that the selected pairs are the most informative to train the model.  The final issue is that the optimization of the positive pairs is independent from the negative pairs, but the optimization should force the distance between positive pairs to be smaller than negative pairs.

\paragraph{Triplet loss.}

The triplet loss addresses the last issue mentioned above~\cite{facenet,smart,DBLP:conf/cvpr/QianJZL15}, and it is defined based on three data points: an anchor point, a positive point (i.e., a point belonging to the same class as the anchor), and a negative point (i.e., a point from a different class of the anchor). The loss will force the positive pair distance plus a margin to be smaller than the negative pair distance. However, similarly to pairwise loss, setting the margin in the triplet loss requires careful tuning. Furthermore, and also similarly to the pairwise loss, the training complexity is quite high with $\mathcal{O}(N^3/C)$, hence several triplet mining strategies have been proposed. For instance, in~\cite{facenet}, the authors proposed a ``semi-hard'' criterion, where a triplet is selected if the negative distance is small (i.e., within the margin) but larger than the positive distance --- this approach reduces the training complexity to $\mathcal{O}(N^3/CM^2)$, where $M < N$ represents the number of mini-batches used for training. In~\cite{smart}, the authors proposed to use fast approximate nearest neighbor search for quickly identifying informative (hard) triplets for training, reducing the training complexity to $\mathcal{O}(N^2)$. In~\cite{tripletproxy}, the mining is replaced by the use of $P < N$ proxies, where a triplet is re-defined to be an anchor point, a similar and a dissimilar proxy -- this reduces the training complexity to $\mathcal{O}(NP^2)$. Movshovitz et al.~\cite{tripletproxy} show that this loss computed with proxies represents an upper bound to the original triplet loss, where this bound gets closer to the original triplet loss as $P \rightarrow N$, which increases the complexity back to $\mathcal{O}(N^3)$. It is worth noting that the idea of using proxies 
and learning the embedding such as minimizing the distances between samples to their proxies has been investigated in~\cite{DBLP:conf/nips/PerrotH15}. However, different from~\cite{tripletproxy} which is a DML approach and is expected to capture the nonlinearity between samples, thanks to the power of the deep model, in~\cite{DBLP:conf/nips/PerrotH15} the authors used precomputed features, and to capture the nonlinearity, they relied on kernelization.
 We note that the approach~\cite{tripletproxy} has a non-linear term that makes it more complex than the $\mathcal{O}(NC)$ complexity of our approach, for $C \approx P$, which is usually the case.  Moreover, the  approach~\cite{tripletproxy} also requires the optimization of the number and locations of proxies~\cite{tripletproxy} during training, while our approach relies on a set of predefined and fixed $C$ centroids.
 
\paragraph{Other losses.}

In~\cite{global} the authors proposed a global loss function 
that uses the first and second order statistics of sample distance distribution in the embedding space to allow for robust training of triplet network, but the complexity is still $\mathcal{O}(N^3)$. Ustinova and Lempitsky~\cite{histogram} proposed a histogram loss function that is computed by estimating two distributions of similarities for positive  and negative pairs. Based on the estimated distributions, the loss will compute the probability of a positive pair to have a lower similarity score than a negative pair, where the training complexity is $\mathcal{O}(N^2)$. In~\cite{clustering} the authors proposed a loss which optimizes a global clustering metric (i.e., normalized mutual information). This loss ensures that the score of the ground truth clustering assignment is higher than the score of any other clustering assignment -- this method has complexity $\mathcal{O}(NY^3)$, where $Y<N$ represents the number of clusters. 
Similarly to~\cite{tripletproxy}, this approach has a non-linear term w.r.t. number of clusters, that makes it more complex than the $\mathcal{O}(NC)$ complexity of our approach. In addition, this method also requires to optimize the locations of clusters during training, while our approach relies on a set of predefined and fixed $C$ centroids. 
In~\cite{npair}, the authors proposed the N-pair loss which generalizes the triplet loss by allowing joint comparison among more than one negative example -- the complexity of this method is again $\mathcal{O}(N^3)$. 
More recent works~\cite{opitz2017bier,wang2017deep} proposed the use of ensemble classifiers~\cite{opitz2017bier} and new similarity metrics~\cite{wang2017deep} which can in principle explore the $\mathcal{O}(NC)$ training loss that we propose.  
A relevant alternative method recently proposed in the field is related to an ad-hoc linearization of the triple loss that, differently from our approach, has not been theoretically justified~\cite{guerriero2018deepncm,DBLP:conf/nips/SnellSZ17,wang2017normface,DBLP:conf/eccv/WenZL016}.  In addition, even though these approaches rely on a loss function that has run-time complexity $\mathcal{O}(NC)$, they also need to run an expensive centroid optimization step after processing each mini-batch, which has complexity $\mathcal{O}(N)$.  Assuming that a mini-batch has $N/B$ samples, then the run-time complexity of this approach is $\mathcal{O}(NC + N^2/B)$.  Most of the research developed for these methods are centered on the mitigation of this  $\mathcal{O}(N^2)$ complexity involved in the class centroid optimization.  Interestingly, this step is absent from our proposed approach, which means that our method is the only approach in the field that is guaranteed to have linear run-time complexity.

\section{Discriminative Loss}
\label{sec:disloss}
Assume that the training set is represented by $\T=\{\I_i, y_i\}_{i=1}^{N}$ in which $\I_i \in \mathbb R^{H \times W}$ and $y_i \in \{1,...,C\}$ denote the training image and its class label, respectively. Let $\x_i \in \R^D$ be the feature embedding of $\I_i$, obtained from the deep learning model $\x = f(\I,\theta)$. To control the magnitude of distance between feature embeddings, we assume that $\norm{\x}=1$ (i.e., all points lie on a unit hypersphere\footnote{We use $l_2$ distance in this work.}). From an implementation point of view, this assumption can be guaranteed with the use of a normalization layer. Furthermore, without loss of generalization, let us assume that the dimension of the embedding space equals the number of classes, i.e., $D=C$. Note that if $D \neq C$ we can enforce this assumption by adding a fully connected layer to project features from $D$ dimensions to $C$ dimensions.

\subsection{Discriminative Loss: Upper Bound on the Triplet Loss} 

In order to avoid the cubic complexity (in the number of training points) of the triplet loss and to avoid the complicated hard-negative mining strategies, we propose a new loss function that has linear complexity on $N$, but inherits the property of triplet loss: feature embeddings from the same classes are pulled  together, while feature embeddings from different classes are pushed apart. 

    Assume that we have a set $\mathcal{S} =  \{(i,j) | y_i = y_j,   \}_{i,j \in \{1,...,N \}}$ representing pairs of images $\I_i$ and $\I_j$ belonging to the same class. Let us start with a simplified form of the triplet loss:
\begin{equation}
L_t(\T,\mathcal{S}) = \sum_{(i,j) \in S, (i,k) \notin S, i,j,k \in \{1,...,N\} } \ell_t(\x_i,\x_j, \x_k),
\label{eq:triplet}
\end{equation}
where $\ell_t$ is defined as 
\begin{equation}
\ell_t(\x_i,\x_j, \x_k) = \norm{\x_i - \x_j} - \norm{\x_i - \x_k}.
\label{eq:triplet0}
\end{equation}
Let $\mathcal{C} = \{\cc_m\}_{m=1}^C$, where each $\cc_m \in \R^D$ is an auxiliary vector in the embedding space that can be seen as the ``centroid'' for the $m^{th}$ class (note that as the centroids represent  classes in the embedding space, they should be defined in the same domain with the embedding features, i.e., on the surface of the unit hypersphere). According to the triangle inequality, we have 
\begin{equation}
\norm{\x_i - \x_j} \le \norm{\x_i  - \cc_{y_i}} + \norm{\x_j - \cc_{y_i}},
\label{eq:tri1}
\end{equation}
and
\begin{equation}
\norm{\x_i - \x_k} \ge \norm{\x_i  - \cc_{y_k}} - \norm{\x_k - \cc_{y_k}}.
\label{eq:tri2}
\end{equation}
From (\ref{eq:tri1}) and (\ref{eq:tri2}) we achieve the upper bound for $\ell_t$ as follows:
\begin{equation}
\ell_t(\x_i,\x_j, \x_k) \le \ell_d(\x_i, \x_j, \x_k),
\label{eq:uptriplet0}
\end{equation}
where 
\begin{eqnarray}
\ell_d(\x_i, \x_j, \x_k)&=&\norm{\x_i - \cc_{y_i}} - \norm{\x_i - \cc_{y_k}}\nonumber \\ 
{}&+& \norm{\x_j - \cc_{y_i}} + \norm{\x_k - \cc_{y_k}}
\label{eq:ld}
\end{eqnarray}
From (\ref{eq:triplet}) and (\ref{eq:uptriplet0}), we have 
\begin{equation}
L_t(\T,\mathcal{S}) \le \sum_{(i,j) \in S, (i,k) \notin S, i,j,k \in \{1,...,N\}} \ell_d(\x_i, \x_j, \x_k).
\label{eq:uptriplet1}
\end{equation}
The central idea in the paper is to minimize the upper bound defined in (\ref{eq:uptriplet1}). 
Assume that we have a balanced training problem, where the number of samples in each class is equal (for the imbalanced training, this assumption can be enforced by data augmentation), after some algebraic manipulations, the RHS of (\ref{eq:uptriplet1}) is equal to $L_d(\T,\mathcal{S})$ which is our proposed discriminative loss
\begin{equation}
\footnotesize
L_d(\T,\mathcal{S}) =  G \sum_{i=1}^N \left( \norm{\x_i - \cc_{y_i}} - \frac{1}{3(C-1)} \sum_{m=1,m \neq y_i}^C \norm{\x_i - \cc_{m}} \right)
\label{eq:disloss}
\end{equation}
where the constant $G=3(C-1) \left( \frac{N}{C} - 1\right) \frac{N}{C}$.

Our goal is to minimize $L_d(\T,\mathcal{S})$ which is a discriminative loss that simultaneously pulls samples from the same class close to their centroid and pushes samples far from centroids of different classes. A nice property of $L_d(\T,\mathcal{S})$ is that it is not arbitrary far from $L_t(\T,\mathcal{S})$. The difference between these two losses is well bounded by Lemma~\ref{lemma}.  
\begin{lemma}
\label{lemma}

Assuming that $\norm{\x_i - \cc_{y_i}} \leq \epsilon/2$ (with $\epsilon \geq 0$), $1\le i\le N$. Let $\kappa_{min} = \min_{1\le m,n \le C, m\neq n} \norm{\cc_{m} - \cc_{n}}$ and $\kappa_{max} = \max_{1\le m,n \le C, m\neq n} \norm{\cc_{m} - \cc_{n}}$, 
then \\
$0 \le L_d(\T,\mathcal{S}) - L_t(\T,\mathcal{S}) \le H \left( \kappa_{max} - \kappa_{min} + 3\epsilon\right)$, where the constant $H=\left(\frac{N}{C}-1\right)N\left( N-\frac{N}{C}\right)$ is the number of all possible triplets.
\end{lemma}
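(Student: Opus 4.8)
The plan is to reduce everything to a per-triplet estimate and then sum over the triplet index set of~(\ref{eq:triplet}). Because the balanced-class assumption already used to derive~(\ref{eq:disloss}) makes $L_d(\T,\mathcal{S})$ equal to $\sum_{(i,j)\in S,\,(i,k)\notin S}\ell_d(\x_i,\x_j,\x_k)$, it is enough to control $\ell_d(\x_i,\x_j,\x_k)-\ell_t(\x_i,\x_j,\x_k)$ for one generic triplet with $y_i=y_j\neq y_k$ and then multiply by the number $H=(\tfrac{N}{C}-1)N(N-\tfrac{N}{C})$ of such triplets. The lower bound is then immediate: (\ref{eq:uptriplet0}), which is just the combination of the triangle inequalities~(\ref{eq:tri1})--(\ref{eq:tri2}), gives $\ell_t\le\ell_d$ term by term, hence $L_t\le L_d$ and $L_d-L_t\ge 0$.

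For the upper bound I would write the per-triplet gap as the sum $A+B$ of the two slacks lost in~(\ref{eq:tri1}) and~(\ref{eq:tri2}),
\[
A=\norm{\x_i-\cc_{y_i}}+\norm{\x_j-\cc_{y_i}}-\norm{\x_i-\x_j},\qquad B=\norm{\x_i-\x_k}-\norm{\x_i-\cc_{y_k}}+\norm{\x_k-\cc_{y_k}}.
\]
The term $A$ is nonnegative (triangle inequality) and, since $\cc_{y_j}=\cc_{y_i}$, the hypothesis $\norm{\x_i-\cc_{y_i}}\le\epsilon/2$ applied to the indices $i$ and $j$ gives $A\le\norm{\x_i-\cc_{y_i}}+\norm{\x_j-\cc_{y_j}}\le\epsilon$. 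For $B$ the key move is to route the estimate through the centroids: a double use of the triangle inequality yields $\norm{\x_i-\x_k}\le\norm{\x_i-\cc_{y_i}}+\norm{\cc_{y_i}-\cc_{y_k}}+\norm{\cc_{y_k}-\x_k}\le\kappa_{max}+\epsilon$, while the reverse triangle inequality gives $\norm{\x_i-\cc_{y_k}}\ge\norm{\cc_{y_i}-\cc_{y_k}}-\norm{\x_i-\cc_{y_i}}\ge\kappa_{min}-\epsilon/2$; combined with $\norm{\x_k-\cc_{y_k}}\le\epsilon/2$ this gives $B\le\kappa_{max}-\kappa_{min}+2\epsilon$. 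Adding the two estimates, $\ell_d-\ell_t\le\kappa_{max}-\kappa_{min}+3\epsilon$, and summing over the $H$ triplets yields the claimed bound.

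I do not expect a genuine obstacle here — the argument is essentially bookkeeping on top of the triangle inequality. The one non-mechanical decision is precisely the choice, in bounding $B$, to insert the centroid-to-centroid distance and thereby expose $\kappa_{max}$ and $\kappa_{min}$; a more direct estimate $B\le 2\norm{\x_k-\cc_{y_k}}\le\epsilon$ is actually tighter but hides the geometric quantities that give the bound its meaning (the gap shrinks as the centroids become far apart and mutually equidistant, and as training drives $\epsilon\to 0$). The only things needing care are invoking the hypothesis at the right indices (in particular at $k$, which also ranges over $\{1,\dots,N\}$), keeping the signs straight in the reverse triangle inequality, and checking that the triplet count is exactly $H$ — the last point being inherited from the same balanced-class counting used to pass from~(\ref{eq:uptriplet1}) to~(\ref{eq:disloss}).
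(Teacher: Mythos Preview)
Your proof is correct and follows essentially the same route as the paper: the same per-triplet estimates via the triangle and reverse triangle inequalities through the centroids, then summation over the $H$ triplets. The only cosmetic difference is packaging --- you split $\ell_d-\ell_t$ into the two nonnegative slacks $A$ and $B$ coming from~(\ref{eq:tri1}) and~(\ref{eq:tri2}) and bound each, whereas the paper bounds $\ell_d$ from above and $-\ell_t$ from above separately and adds; the four underlying inequalities ($\norm{\x_i-\cc_{y_i}}\le\epsilon/2$ at $i,j,k$, $\norm{\x_i-\cc_{y_k}}\ge\kappa_{min}-\epsilon/2$, and $\norm{\x_i-\x_k}\le\kappa_{max}+\epsilon$) are identical in both.
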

 
\begin{proof}
The proof is provided in the Appendix. 
\end{proof}

From Lemma~\ref{lemma}, $L_d$ will approach $L_t$ when  $\epsilon \to 0$ and $\kappa_{min} \rightarrow \kappa_{max}$. (i) Note from \eqref{eq:disloss} that $\epsilon$ will decrease because the discriminative loss pulls samples from the same class close to their corresponding centroid.  (ii) In addition, we can enforce that $\kappa_{min} \approx \kappa_{max}$ by fixing the centroids before the training starts, such that they are as far as possible from each other and the distances between them are similar.  
Therefore, with the observations (i) and (ii), we can expect that $\epsilon \rightarrow 0$ and $\kappa_{max} - \kappa_{min} \rightarrow 0$, which implies a tight bound from Lemma~\ref{lemma}.  We discuss methods to generate centroid locations below.

\subsection{Centroid Generation}
\label{sec:centroid_gen}
From the observations above, we should have centroids on the surface of the unit hypersphere such that they are as far as possible from each other and the distances between them are as similar as possible. Mathematically, we want to maximize the minimum distance between $C$ centroids -- this problem is known as the Tammes problem~\cite{tammes}. Let $\F$ be the surface of the hypersphere, we want to solve the following optimization:
\begin{eqnarray}
\min_{ \{ \mathbf{c}_m \}_{m=1}^C} && -w \nonumber \\ 
{} && \hspace{-4.5em}{\textrm{s.t.}}\ \ \ \ w  \leq  \norm{\cc_m - \cc_n},  \ 1 \le m, n \le C, m \neq n \label{eq:tammes}\\ 
{} && \cc_m \in \F,  \ m = 1,...,C \nonumber
\end{eqnarray}

Unfortunately, it is not possible to solve (\ref{eq:tammes}) analytically in general~\cite{sphere}. We may solve it as an optimization problem. However, this optimization will involve $\mathcal{O}(C^2)$ constraints, hence the problem is still computationally \textit{hard} to solve for large $C$~\cite{sphere1}. To overcome this challenge, we propose two heuristics to generate the centroids. 

\paragraph{One-hot centroids.} Inspired by the softmax loss, we define the centroids as vertices of a convex polyhedron in which each vertex is a one-hot vector, i.e., the centroid of the $m^{th}$ class is the standard basis vector in $m^{th}$ direction of the Cartesian coordinate system. With this configuration, each centroid is orthogonal to each other and the distance between each pair of centroids is $\sqrt{2}$. 

\vspace{0.2cm}
\paragraph{K-means centroids.} We first uniformly generate a large set of points on the surface of the unit hypersphere. We then run K-means clustering to group points into $C$ clusters. The unit normalized cluster centroids will be used as centroids $\{ \mathbf{c}_m \}_{m=1}^C$. Note that the process of uniformly generating points on the surface of the unit  hypersphere is not difficult. According to~\cite{sphere2}, for each point, we generate each of its dimension independently with the standard normal distribution. Then we unit normalize the point to project it to the surface of the hypersphere.

\subsection{Discussion on the Discriminative Loss}
\label{subsec:diss}

\paragraph{Training complexity.} 
Table~\ref{tab:complexity} compares the asymptotic  training complexity of several DML methods, including our proposed discriminative loss (Discriminative) in terms of the number of training samples $N$, number of mini-batches $M$, size of mini-batch $B$, number of classes $C$, number of proxies $P$~\cite{tripletproxy} and number of clusters $Y$~\cite{clustering}.  
It is clear from (\ref{eq:disloss}) that our proposed discriminative loss has linear run-time complexity (in terms of both $N$ and $C$), analogous to the softmax loss~\cite{genericfea}. The methods that optimize an approximate triplet loss, which have linear complexity in $N$ are represented by ``centroids''~\cite{guerriero2018deepncm,DBLP:conf/nips/SnellSZ17,wang2017normface,DBLP:conf/eccv/WenZL016} in Table~\ref{tab:complexity}, but note in the table that the optimization of the centroids must be performed after processing each mini-batch, which increases the complexity of the approach to be square in $N$. 
Most of the research in the   ``centroids'' approach goes into the reduction of the complexity in the optimization of class centroids  
with the design of poor, but computationally cheap, approximate methods. For example, in~\cite{DBLP:conf/eccv/WenZL016}, instead of updating the centroids with respect to the entire training set, the authors perform the update based on mini-batch. This leads to a linear complexity in $N$. However by updating centroids based on mini-batch, a  small  batch  size  (e.g.  due  to  a  large  network  structure, which is likely) may cause a poor approximation of the real centroids. In the worst case, when not all classes are present in a batch, some centroids are even not be updated.
Interestingly, the centroid update step is absent from our proposed approach.

There are other DML methods that are linear in $N$: clustering~\cite{clustering} with $\mathcal{O}(NY^3)$ and triplet+proxy~\cite{tripletproxy} with $\mathcal{O}(NP^2)$.  There are two advantages of our approach compared to these two methods in terms of training complexity: 1) our discriminative loss is linear not only in terms of the dominant variable $N$, but also with respect to the auxiliary variable $C < N$ (where in general $C \approx P,Y$); and 2) in our work, the number of centroids and their positions are fixed before the training process starts (as explained in Sec.~\ref{sec:centroid_gen}), hence there is no need to optimize the number and positions of centroids during training --- this contrasts with the fact that the number and positions of clusters and proxies need to be optimized in~\cite{clustering} and~\cite{tripletproxy}.

\begin{table}[!t]
\vspace{0.2cm}
\begin{center}
\resizebox{\columnwidth}{!}{
\begin{tabular}{c c c c}
\hline
 Softmax~\cite{genericfea} &Pair-na\"ive &Trip.-na\"ive &Trip.-hard~\cite{facenet}\\
$\mathcal{O}(NC)$ &$\mathcal{O}(N^2)$  &$\mathcal{O}(N^3/C)$ &$\mathcal{O}(N^3/(M^2C))$ \\
\hline
 Trip.-smart~\cite{smart} & Trip.-cluster~\cite{clustering} & Trip.-proxy~\cite{tripletproxy} & Centroids~\cite{guerriero2018deepncm,DBLP:conf/nips/SnellSZ17,wang2017normface,DBLP:conf/eccv/WenZL016} \\
$\mathcal{O}(N^2)$ &$\mathcal{O}(NY^3)$ &$\mathcal{O}(NP^2)$ &$\mathcal{O}(NC + N^2/B)$ \\
\hline
\multicolumn{4}{c}{Discriminative} \\
\multicolumn{4}{c}{$\mathcal{O}(NC)$} \\
\hline
\end{tabular}}
\end{center}
\caption{Run-time training complexity of various DML approaches and our proposed discriminative loss in terms of the number of training samples $N$, number of mini-batches $M$, size of mini-batch $B$, number of classes $C$, number of proxies $P$~\cite{tripletproxy} and number of clusters $Y$~\cite{clustering}.}
\label{tab:complexity}
\end{table}

\paragraph{Simplicity. } The discriminative loss only involves the calculation of Euclidean distance between the embedding features and the centroids. Hence it is straightforward to implement and integrate into any deep learning models to be trained with the standard back-propagation. Furthermore, different from most of the traditional DML losses such as pairwise loss, triplet loss, and their improved versions~\cite{facenet,npair,global,tripletproxy,smart},  
the discriminative loss does not require setting margins, mining triplets, and optimizing the number and locations of centroids during training. This reduction in the number of hyper-parameters makes the training simpler and improves the performance (compared to standard triplet methods), as showed in the experiments.

\vspace{0.1cm}
\section{Experiments}
\label{sec:exp}\vspace{0.2cm}
\subsection{Dataset and Evaluation Metric}

We conduct our experiments on two public benchmark datasets that are commonly used to evaluate DML methods, where we follow the standard experimental protocol for both datasets~\cite{smart,npair,clustering,lifted}. 
The \textbf{CUB-200-2011} dataset~\cite{cub} contains 200 species of birds with 11,788 images, where the first 100 species with 5,864 images are used for training and the remaining 100 species with 5,924 images are used for testing. The \textbf{CAR196} dataset~\cite{car} contains 196 car classes with 16,185 images, where the first 98 classes with 8,054 images are used for training and the remaining 98 classes with 8,131 images are used for testing. We report the K nearest neighbor retrieval accuracy using the Recall@K metric. We also report the clustering quality using the normalized mutual information (NMI) score~\cite{nmi}. 
\vspace{0.1cm}
\subsection{Network Architecture and Implementation Details}
\label{subsec:architecture}
For all experiments in Sections~\ref{subsec:study} and \ref{subsec:sota}, we initialize the network with the pre-trained GoogLeNet~\cite{googlenet} -- this is also a standard practice in the comparison between DML approaches~\cite{smart,npair,clustering,lifted}. We then add two randomly initialized fully connected layers. The first layer has $256$ nodes, which is the commonly used embedding dimension in previous works, and the second layer has $C$ nodes. 
We train the network for a maximum of 40 epochs. 
For the last two layers, we start with an initial learning rate of 0.1 and gradually decrease it by a factor of 2 every 5 epochs. Following~\cite{lifted}, all GoogLeNet layers are fine-tuned with the learning rate that is ten times smaller than the learning rate of the last two layers. 
The weight decay and the batch size are set to 0.0005 and 128, respectively in all experiments. As normally done in previous works, random cropping and random horizontal flipping are used when training. 

\begin{table*}[!t]
   \centering
    \begin{center}
    \begin{tabular}{c c c c c| c c c c} 
    \hline 
    &\multicolumn{4}{c|}{{CUB-200-2011}} &\multicolumn{4}{c}{{CAR196}}\\
&R@1 &R@2 &R@4 &R@8 &R@1 &R@2 &R@4 &R@8\\  
Last layer &49.49 &62.32 &72.52 &81.57        &65.32 &76.44 &84.10 &89.84\\
Second to last layer &51.43 &64.23 &74.31 &82.83 &68.31 &78.21 &85.22 &91.18\\
\hline
    \end{tabular}
    \end{center}
    \caption{Performance of embedding features from the last two layers.} 
    \label{tab:layer}
\end{table*}

\begin{table*}[!t]
\centering
\begin{center}
\begin{tabular}{c c c c c c c c c}
\hline
&\multicolumn{8}{c}{{CUB-200-2011}} \\
&R@1 &R@2 &R@4 &R@8 &min dist. &max dist. &mean dist. &std dist.\\
one-hot cent. &51.43 &64.23 &74.31 &82.83 &$\sqrt[]{2}$ &$\sqrt[]{2}$ &$\sqrt[]{2}$ &0 \\
K-means cent. &50.75 &63.54 &73.26 &82.36 &1.21 &1.63 &1.418 &0.061 \\
&\multicolumn{8}{c}{{CAR196}} \\
one-hot cent. &68.31 &78.21 &85.22 &91.18 &$\sqrt[]{2}$ &$\sqrt[]{2}$ &$\sqrt[]{2}$ &0 \\
K-means cent. &66.93 &76.74 &83.80 &90.37 &1.18 &1.65 &1.416 &0.066 \\
\hline
\end{tabular}
\end{center}
\caption{Performance of two different centroid generation strategies and the statistics of distances (dist.) between centroids.}
\label{tab:centroid}
\end{table*}

\begin{table*}[!t]
	\centering
	\begin{center}
		\begin{tabular}{c c c c c c} 
			\hline
			&NMI &R@1 &R@2 &R@4 &R@8 \\ \hline
			SoftMax &57.21 &48.34 &60.16 &71.21 &80.30 \\
			Semi-hard~\cite{facenet} &55.38 &42.59 &55.03 &66.44 &77.23 \\
			Lifted structure~\cite{lifted} &56.50 &43.57 &56.55 &68.59 &79.63 \\
			N-pair~\cite{npair} &57.24 &45.37 &58.41 &69.51 &79.49 \\
			Triplet+Global~\cite{global} &58.61 &49.04 &60.97 &72.33 &81.85 \\
			Clustering~\cite{clustering} &59.23 &48.18 &61.44 &71.83 &81.92 \\
			Triplet+smart mining~\cite{smart} &59.90 &49.78 &62.34 &74.05 &\textbf{83.31} \\
			Triplet+proxy~\cite{tripletproxy} &59.53 &49.21 &61.90 &67.90 &72.40 \\
			Histogram~\cite{histogram} &- &50.26 &61.91 &72.63 &82.36 \\
			Discriminative &\textbf{59.92} &\textbf{51.43} &\textbf{64.23} &\textbf{74.31} &{82.83} \\ \hline
		\end{tabular}
	\end{center}
	\caption{Clustering and Recall performance on the CUB-200-2011 dataset.} 
	\label{tab:cub}
\end{table*}

\begin{table*}[!t]
	\centering
	\begin{center}
		\begin{tabular}{c c c c c c} 
			\hline
			&NMI &R@1 &R@2 &R@4 &R@8 \\ \hline
			SoftMax &58.38 &62.39 &72.96 &80.86 &87.37 \\
			Semi-hard~\cite{facenet} &53.35 &51.54 &63.78 &73.52 &82.41 \\
			Lifted structure~\cite{lifted} &56.88 &52.98 &65.70 &76.01 &84.27 \\
			N-pair~\cite{npair} &57.79 &53.90 &66.76 &77.75 &86.35 \\
			Triplet+Global~\cite{global} &58.20 &61.41 &72.51 &81.75 &88.39 \\
			Clustering~\cite{clustering} &59.04 &58.11 &70.64 &80.27 &87.81 \\
			Triplet+smart mining~\cite{smart} &59.50 &64.65 &76.20 &84.23 &90.19 \\
			Triplet+proxy~\cite{tripletproxy} &\textbf{64.90} &\textbf{73.22} &\textbf{82.42} &\textbf{86.36} &88.68 \\
			Histogram~\cite{histogram} &- &54.34 &66.72 &77.22 &85.17 \\
			Discriminative &59.71 &68.31 &78.21 &85.22 &\textbf{91.18} \\ \hline
		\end{tabular}
	\end{center}
	\caption{Clustering and Recall performance on the CAR196 dataset.} 
	\label{tab:car}
\end{table*}

\subsection{Ablation Study}
\label{subsec:study}
\vspace{0.1cm}
\paragraph{Effect of features from different layers.} In this experiment we evaluate the embedding features from the last two fully connected layers with dimensions $256$ and $C$ ($C=100$ for CUB-200-2011 and $C = 98$ for CAR196).  These results are based on the one-hot centroid generation strategy, but note that the same evidence was produced with the K-means centroid generation.
The results in Table~\ref{tab:layer} show that the features from the second to last layer produce better generalization for unseen classes than those from the last layer. The possible reason is that the features from the last layer may be too specific to the set of training classes. Hence for tasks on unseen classes, the features from the second to last layer produce better performance. The same observation is also found in~\cite{genericfea} (although Razavian et al.~\cite{genericfea} experimented with AlexNet~\cite{alexnet}). Hereafter, we only use the features from the second to last fully connected layer for the remaining experiments. Note that this also allows for a fair comparison between our work and previous approaches in terms of feature extraction complexity because these other approaches also use the feature embeddings extracted from the same layer.

\paragraph{Effect of centroid generation method.}
In this section, we evaluate the two proposed centroid generation methods, explained in Sec.~\ref{sec:centroid_gen}, where the hypersphere for the K-means approach has $C$ dimensions. 
The comparative performances and  statistics of distances between centroids are shown in Table~\ref{tab:centroid}.  The results show that there is not a significant difference in performance between the two centroid generation methods. In the worst case, K-means is 1.5\% worse than one-hot on CAR196 dataset while on CUB-200-2011 dataset, these two methods are comparable. Hereafter, we only use one-hot centroid generation strategy for all remaining experiments. 

According to Table~\ref{tab:centroid}, we note that the difference between the minimum and the maximum distances between centroids is quite small for K-means and $0$ for the one-hot centroid generation methods.  This is an important fact for the triplet loss bound in the Lemma~\ref{lemma}, where the smaller this difference, the tighter the bound to the triplet loss.

\subsection{Comparison with Other Methods}
\label{subsec:sota}
We compare our method to the baseline DML methods that have reported results on the standard datasets CUB-200-2011 and CAR196: the  \textbf{softmax} loss, the \textbf{triplet} loss with \textbf {semi-hard} negative mining~\cite{facenet}, the \textbf{lifted structured} loss~\cite{lifted}, the \textbf{N-pair}~\cite{npair} loss, the \textbf{clustering} loss~\cite{clustering}, the \textbf{triplet} combined with \textbf{global} loss~\cite{global}, the \textbf{histogram} loss~\cite{histogram}, the \textbf{triplet with proxies}~\cite{tripletproxy} loss, \textbf{triplet with smart mining} ~\cite{smart} loss which uses the fast nearest neighbor search for mining triplets. 

Tables~\ref{tab:cub} and~\ref{tab:car} show the recall and NMI scores for the baseline methods and our approach (\textbf{Discriminative}). 
The results on Tables~\ref{tab:cub} and~\ref{tab:car} show that for the NMI metric, most triplet-based methods achieve comparable results, except for \textbf{Triplet+proxy}~\cite{tripletproxy} which has a 5.2\% gain over the second best \textbf{Discriminative} on the CAR196 dataset. 
Under Recall@K metric, the \textbf{Discriminative} improves over most of methods that are based on triplet, (e.g., \textbf{Semi-hard}~\cite{facenet})  or  generalization of triplet (e.g., \textbf{N-pair}~\cite{npair}, \textbf{Triplet+Global}~\cite{global}). Compared to the softmax loss, although both discriminative loss and softmax loss have the same complexity, \textbf{Discriminative} improves over \textbf{Softmax} by a large margin for all measures on both datasets. This suggests that the  discriminative loss is more suitable for DML than the  softmax loss.

\textbf{Discriminative} also compares favorably with the recent \textbf{triplet+smart mining} method~\cite{smart}, i.e., on the CAR196 dataset, \textbf{Discriminative} has 3.6\% improvement in R@1 over the \textbf{triplet+smart mining}. Compared to the recent \textbf{Triplet+proxy} on the CUB-200-2011 dataset, \textbf{Discriminative} shows better results at all ranks of $K$, where larger improvements are observed at larger $K$, i.e.,  \textbf{Discriminative} has 10.4\% (14.4\% relative) improvement in R@8 over \textbf{Triplet+proxy}. On the CAR196 dataset, \textbf{Triplet+proxy} outperforms the \textbf{Discriminative} at low values of $K$, i.e., \textbf{Triplet+proxy} has 4.9\% (7.2\% relative) higher accuracy than \textbf{Discriminative} at R@1. However, for increasing values of $K$, the improvement of \textbf{Triplet+proxy} decreases, and \textbf{Discriminative} achieves a higher accuracy than \textbf{Triplet+proxy} at R@8.

We are aware that there are other triplet-based methods that achieve better performance on CUB and CAR196 datasets~\cite{Duan_2018_CVPR,Lin_2018_ECCV,opitz2017bier,wang2017deep,yuan2016hard}. Table~\ref{tab:sota-triplet} presents their results.  However, it is important to note that although these methods use the triplet loss, they rely on additional techniques to boost their accuracy.  For instance, Yuan et al.~\cite{yuan2016hard} used cascaded embedding to ensemble a set of models; Opitz et al.~\cite{opitz2017bier} relied on boosting to combine different learners; Wang et al.~\cite{wang2017deep} combined angular loss with N-pair loss~\cite{npair} to boost performance; Duan et al.~\cite{Duan_2018_CVPR} and Lin et al.~\cite{Lin_2018_ECCV} used generative adversarial network (GAN) to generate synthetic training samples. We note that these techniques can in principle replace their triplet loss by our discriminative loss to improve training efficiency. However, this is out of scope of this paper, but we consider this to be future work.
\begin{table}
\footnotesize
\centering
 \begin{center}
\begin{tabular}{c c c c c c c}
\hline
		 		&\cite{yuan2016hard} &\cite{opitz2017bier} &\cite{wang2017deep} &\cite{Duan_2018_CVPR} &\cite{Lin_2018_ECCV} &Discrim.\\ \hline
CUB-200-2011    &53.6 &55.3 &54.7 &52.7 &52.7 &51.4\\ \hline
CAR196 			&73.7 &78.0 &71.4 &75.1 &82.0 &68.3\\ \hline
\end{tabular}
\end{center}
\caption{R@1 comparison to the state of the art on CUB-200-2011 and CAR196 datasets. Although all these methods relied on the triplet loss, they also use additional techniques specifically designed to boost classification accuracy. 
}
\label{tab:sota-triplet}
\end{table}
\vspace{-0.4cm}
\paragraph{Training time complexity.}
To demonstrate the efficiency of the proposed method, we also compare the empirical training time of the  proposed discriminative loss to other triplet-based methods, i.e., \textbf{Semi-hard}~\cite{facenet} and \textbf{triplet with smart mining}~\cite{smart}. 
All methods were tested on the same machine and we use the default configurations of ~\cite{facenet} and~\cite{smart}.

 \begin{table}[!t]
 \centering
 \footnotesize
 \begin{center}
 \begin{tabular}{c c c c }
 \hline
 &Semi-hard~\cite{facenet} &Triplet &Discrim. \\
 &						   &+smart mining~\cite{smart} &\\
 CUB-200-2011 &660 & 680 &54\\
 CAR196 & 1200 & 1240 &73 \\
 \hline
 \end{tabular}
 \end{center}
 \caption{Training time in minutes between different methods. The CUB-200-2011 dataset consists of $5864$ training images with 100 classes, and the CAR196 dataset consists of $8054$ training images with 98 classes.}
 \label{tab:trainingtime}
 \end{table}

The results in Table~\ref{tab:trainingtime} show that the training time of the proposed methods (\textbf{Discrim.}) is around 13 and 17 times faster than the recent state-of-the-art \textbf{triplet with smart mining}~\cite{smart} on CUB and CAR196 datasets, respectively. The results also confirm that our loss scales linearly w.r.t. number of training images and number of classes, i.e., $(5864 \times 100)/(8054 \times 98) \approx 54/73$.
 
\subsection{Improving with Different Network Architectures}
As presented in Section~\ref{subsec:diss}, the proposed loss is simple and it is easy to integrate into any deep learning models. To prove the flexibility of the proposed loss, in this section we experiment with the VGG-16 network~\cite{vgg}. Specifically, we apply a max-pooling on the last convolutional layer of VGG to produce a 512-D feature representation. After that, similarly to GoogleNet in Section~\ref{subsec:architecture}, we add two fully connected layers whose dimensions are $256$ and $C$. The outputs of the second to last layer are used as embedding features. Table~\ref{tab:vgg} presents the results when using our discriminative loss with VGG network. 
From Tables~\ref{tab:cub},~\ref{tab:car} and~\ref{tab:vgg}, we can see that using discriminative loss with VGG network significantly boosts the performance on both datasets, e.g., at $R@1$, it improves over GoogleNet $6.3\%$ and $9.8\%$ for CUB-200-2011 and CAR196, respectively. 
 
\begin{table}
\footnotesize
\centering
 \begin{center}
\begin{tabular}{c c c c c c}
\hline
		 		&NMI &R@1 &R@2 &R@4 &R@8 \\ \hline
CUB-200-2011    &61.49 &57.74 &68.46 &78.07 &85.40  \\ \hline
CAR196 			&62.14 &78.15 &85.70 &90.71 &94.21 \\ \hline
\end{tabular}
\end{center}
\caption{Clustering and recall performance when using VGG-16 network with discriminative loss.}
\label{tab:vgg}
\end{table}

We note that using other advance network architectures such as Inception~\cite{inceptionV2}, ResNet~\cite{resnet} rather than GoogleNet~\cite{googlenet}, VGG~\cite{vgg}, may give performance boost as showed in recent works~\cite{Ge_2018_ECCV,DBLP:conf/iccv/ManmathaWSK17,DBLP:conf/eccv/XuanSP18}. 
However, that is not the focus of this paper. Our work targets on developing a linear complexity loss that approximates the triplet loss but offers faster training process with a similar accuracy to the triplet loss.

\section{Conclusion}
\label{sec:concl}
In this paper we propose the first deep distance metric learning method that approximates the triplet loss and is guaranteed to have linear training complexity. Our  proposed discriminative loss is based on an upper bound to the triplet loss, and we theoretically show that this bound is tight depending on the distribution of class centroids. We propose two methods to generate class centroids that enforce that their distribution guarantees the tightness of the bound.  
The experiments on two benchmark datasets show that in terms of retrieval accuracy, the proposed method is competitive while its training time is one order of magnitude faster than triplet-based methods. 
Consequently, this paper proposes the most efficient DML approach in the field, with competitive DML retrieval performance.

\section*{Acknowledgments} This work was partially supported by the Australian Research Council project (DP180103232).

\appendix
\section*{Appendix: Proof for the Lemma~\ref{lemma}}
\begin{proof}
The lower bound, i.e.,  $0 \le L_d(\T,\mathcal{S}) - L_t(\T,\mathcal{S})$ is straightforward by (\ref{eq:uptriplet0}). Here we prove the upper bound. 

By the assumption, for any $\x_i$ and its centroid $\cc_{y_i}$ we have 
\begin{equation}
\norm{\x_i - \cc_{y_i}} \leq \epsilon/2
\label{p1}
\end{equation}
By using the triangle inequality and (\ref{p1}), for any $\x_i$ and the centroids $\cc_{y_i}$, $\cc_{y_k}$ where $\cc_{y_k} \neq \cc_{y_i}$, we have
\begin{equation}
\norm{\x_i - \cc_{y_k}} \ge \norm{\cc_{y_i} - \cc_{y_k}} - \norm{\x_i - \cc_{y_i}} \ge  \kappa_{min} - \epsilon/2
\label{p2}
\end{equation}

From (\ref{eq:ld}), (\ref{p1}), (\ref{p2}), for any triplet $(\x_i, \x_j, \x_k)$  we have
\begin{equation}
 \ell_d \le -\kappa_{min} + 2\epsilon
\label{eq:ldbound}
\end{equation}

By using the norm property and (\ref{p1}), for any pair of $\x_i$ and $\x_k$ that are not same class, we have
\begin{eqnarray}
\norm{\x_i - \x_k} &=&\norm{\x_i - \cc_{y_i} + \cc_{y_i} - \cc_{y_k} + \cc_{y_k} - \x_k} \nonumber \\ 
{}&\leq& \kappa_{max} + \epsilon
\label{p4}
\end{eqnarray}
From (\ref{eq:triplet0}), (\ref{p4}), for any triplet $(\x_i, \x_j, \x_k)$  we have
\begin{equation}
 -\ell_t \le \kappa_{max} + \epsilon
\label{eq:ltbound}
\end{equation}

The upper bound for $L_d - L_t$ is achieved by adding (\ref{eq:ltbound}) and (\ref{eq:ldbound}) over all possible triplets.
\end{proof}

{\small
\bibliographystyle{ieee}
\bibliography{refs}
}

\end{document}